\documentclass[letterpaper, 10 pt, conference]{ieeeconf}  

\IEEEoverridecommandlockouts                              

\overrideIEEEmargins        

\newif\ifanonymous
\anonymousfalse   

\usepackage{moreverb,url}
\usepackage{xspace}
\usepackage{xcolor}
\usepackage{siunitx}

\usepackage{amssymb}
\usepackage{amsmath}
\usepackage{amsbsy}
\usepackage{bbm}
\usepackage{dsfont}

\usepackage{amsthm}

\usepackage{cite}
\usepackage{amsfonts}
\usepackage{graphicx}
\usepackage{textcomp}
\usepackage{xcolor}
\usepackage{color}
\usepackage{tabularray}
\usepackage{tabularx}

\usepackage{hyperref}

\usepackage{graphicx}
\usepackage{wrapfig}
\usepackage{booktabs} 

\usepackage{subfiles}
\usepackage{csquotes}

\usepackage{array}

\usepackage{graphicx}
\usepackage{subfig} 
\usepackage{multirow}
\pdfminorversion=4 

\relpenalty=10000
\binoppenalty=10000

\newcommand\BibTeX{{\rmfamily B\kern-.05em \textsc{i\kern-.025em b}\kern-.08em
T\kern-.1667em\lower.7ex\hbox{E}\kern-.125emX}}

\definecolor{wine}{RGB}{204, 0, 102}
\definecolor{magenta_wine}{RGB}{158, 44, 143}
\definecolor{dusty_wine}{RGB}{143, 59, 101}
\definecolor{ocean}{RGB}{13, 121, 202}
\definecolor{light_ocean}{RGB}{18, 178, 235}
\definecolor{dark_ocean}{RGB}{10, 89, 148}
\definecolor{grey}{RGB}{170, 170, 170}
\definecolor{light-grey}{RGB}{220, 220, 220}
\definecolor{dark_gray}{rgb}{0.2, 0.2, 0.2} 
\definecolor{med-grey}{rgb}{0.3, 0.3, 0.3} 
\definecolor{grape}{RGB}{112,48,160}
\definecolor{aqua}{RGB}{52,172,139}
\definecolor{dark_aqua}{RGB}{35,115,93}
\definecolor{dark_orange}{RGB}{216,92,0}
\definecolor{vibrant_orange}{RGB}{250, 160, 26}
\definecolor{vibrant_blue}{RGB}{14, 120, 255}
\definecolor{vibrant_pink}{RGB}{255, 0, 104}
\definecolor{dark_red}{RGB}{122, 0, 0}
\definecolor{dark_green}{RGB}{0, 92, 34}
\definecolor{dusty_blue}{RGB}{77, 91, 128}
\definecolor{dark_brown}{RGB}{125, 54, 36}

\newcommand{\para}[1]{\medskip\noindent\textbf{#1. }} 
 
\newcommand{\paranopunc}[1]{\medskip\noindent\textbf{#1 }} 

\newcounter{qnum}
\setcounter{qnum}{1}

\newcommand{\latentsafe}{\textcolor{dark_ocean}{\textbf{LatentSafe}}\xspace}
\newcommand{\ours}{\textcolor{vibrant_orange}{\textbf{AnySafe}}\xspace}  %

\newtheorem{theorem}{Theorem}

\usepackage{pifont}
\newcommand{\cmark}{\ding{51}} 
\newcommand{\xmark}{\ding{55}}%





\newcommand{\state}{s}

\newcommand{\latent}{z}
\newcommand{\latentSpace}{\mathcal{Z}}
\newcommand{\latentConstraint}{\latent_c}
\newcommand{\dynz}{f_{\latent}}
\newcommand{\dynamics}{f}
\newcommand{\encoder}{\mathcal{E}}


\newcommand{\obs}{o}
\newcommand{\obsSpace}{\mathcal{O}}

\newcommand{\action}{a}
\newcommand{\actionSpace}{\mathcal{A}}

\newcommand{\policy}{\pi}

\newcommand{\policyTask}{\pi^{\text{task}}}

\usepackage{fontawesome5}
\newcommand{\shield}{\text{\tiny{\faShield*}}}

\newcommand{\failure}{\mathcal{F}}

\newcommand{\ellz}{\ell_\latent}

\newcommand{\latentfailuremargin}{l}

\newcommand{\valfunc}{V}


\newcommand{\monitor}{{\valfunc^{\shield}}} 
\newcommand{\monitorDelta}{\valfunc^{\shield}_\delta}
\newcommand{\fallback}{\policy^{\shield}} 




\newcommand{\dataset}{\mathcal{D}}
\newcommand{\train}{\text{train}}
\newcommand{\calib}{\text{calib}}
\newcommand{\test}{\text{test}}

\newcommand{\semanticEncoder}{\tilde{\encoder}}
\newcommand{\latentSemanticSpace}{\tilde{\mathcal{Z}}}
\newcommand{\latentSemantic}{\tilde{\latent}}
\newcommand{\latentSemanticConstraint}{\tilde{\latent}_c}
\newcommand{\similarity}{\operatorname{sim}}
\newcommand{\threshold}{\delta}
\newcommand{\gtclass}{y}
\newcommand{\prototype}{p}
\newcommand{\alphaCalibration}{\alpha}

\newcommand{\latentConstraintTraining}{\latent_i}
\newcommand{\latentSemanticConstraintTraining}{\tilde{\latent}_i}

\newcommand{\anysafe}{{\textit{AnySafe}}\xspace}

\title{\LARGE \bf
AnySafe: Adapting Latent Safety Filters at Runtime via \\ Safety Constraint Parameterization in the Latent Space
}

\ifanonymous
  \author{Anonymous Authors}
\else
    \author{Sankalp Agrawal$^{*1}$, Junwon Seo$^{*2}$, Kensuke Nakamura$^2$, Ran Tian$^{3,4}$, Andrea Bajcsy$^2$ \thanks{$^{*}$These authors contributed equally to this work. $^{1}$The Ohio State University. {\tt\footnotesize agrawal.268@buckeyemail.osu.edu}. $^{2}$Carnegie Mellon University. {\tt\footnotesize \{junwonse, kensuken, abajcsy\}@andrew.cmu.edu}. $^{3}$UC Berkeley. {\tt\footnotesize rantian@berkeley.edu}. $^{4}$NVIDIA Research.}
    }
\fi

\begin{document}

\maketitle

\begin{abstract}

Recent works have shown that foundational safe control methods, such as Hamilton–Jacobi (HJ) reachability analysis, can be applied in the latent space of world models. While this enables the synthesis of latent safety filters for hard-to-model vision-based tasks, they assume that the safety constraint is known a priori and remains fixed during deployment, limiting the safety filter's adaptability across scenarios. To address this, we propose \textit{constraint-parameterized latent safety filters} that can adapt to user-specified safety constraints at runtime. Our key idea is to define safety constraints by conditioning on an encoding of an image that represents a constraint, using a latent-space similarity measure. The notion of similarity to failure is aligned in a principled way through conformal calibration, which controls how closely the system may approach the constraint representation. The parameterized safety filter is trained entirely within the world model's imagination, treating any image seen by the model as a potential test-time constraint, thereby enabling runtime adaptation to arbitrary safety constraints. In simulation and hardware experiments on vision-based control tasks with a Franka manipulator, we show that our method adapts at runtime by conditioning on the encoding of user-specified constraint images, without sacrificing performance. Video results can be found on the \href{https://any-safe.github.io/}{\textcolor{vibrant_orange}{project website}}.
\end{abstract}

\section{Introduction}
World models offer a promising paradigm for generalizing robot control to hard-to-simulate physical tasks by learning compact latent state spaces and dynamics directly from high-dimensional observations~\cite{hafner2019learning, hafner2023dreamerv3, hansen2024tdmpc2, zhou2024dino}. Recent works have demonstrated that foundational safe control methods, such as Hamilton–Jacobi (HJ) reachability analysis~\cite{mitchell2005time, wabersich2023data}, can be applied directly in a world model’s latent space, enabling safety analysis directly from high-dimensional sensor inputs. By computing robot policies that anticipate and avoid future failures within the world model’s imagination, these \textit{latent safety filters} can proactively steer robots away from hard-to-model constraints, such as spilling the contents of deformable bags~\cite{nakamura2025generalizing} or toppling complex rigid-body structures~\cite{seo2025uncertainty}.

However, most safe control frameworks assume that the state constraints that robots should avoid are determined \textit{a priori} and remain fixed during deployment~\cite{hsu2023safety, wabersich2023data}. In practice, this assumption is overly restrictive: at deployment time, a robot may need to adapt its notion of what is a safety constraint based on changing environments or end-user requirements. For example, consider the robot manipulator in Fig.~\ref{fig:main} that must sweep clutter from a table. In one scenario, it needs to avoid sweeping objects in a particular region (top row), but later it may be tasked with intentionally collecting objects into that same region while avoiding a different one (bottom row). This raises the central question of our work: 
\begin{quote}
\centering 
\textit{How can latent safety filters adapt to safety constraints specified at test-time?}
\end{quote}

\begin{figure}[t!]
\centering
\includegraphics[width=1.0\linewidth]{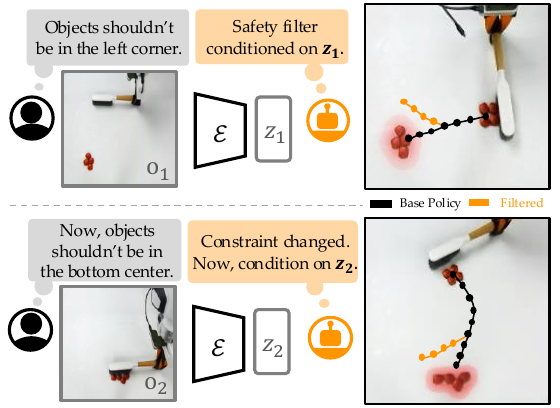}
\caption{\textbf{Constraint-Parameterized Latent Safety Filter.} The latent safety filter adapts its safety constraint by conditioning on an encoding of an image specified by the user as a failure.}
\label{fig:main}
\vspace{-0.2in}
\end{figure}

In this work, we design \textit{constraint-parameterized latent safety filters} (called \anysafe). The core challenge with parameterizing safety constraints in the latent space is that, unlike in hand-designed state spaces, the structure needed to represent and optimize against a suite of safety constraints does not naturally emerge. In hand-designed state spaces, one can design a low-dimensional parameterization of the constraint set (e.g., a circle by its center and radius) alongside a dense distance measure for guiding policy optimization (e.g., signed distance to the constraint set); this allows for the safety filter to be effectively computed for all possible constraint variations. In latent spaces, by contrast, constraints are typically only implicitly defined by classifiers on the latent states \cite{nakamura2025generalizing, seo2025uncertainty} which do not admit a continuous parameterization to represent diverse safety constraints nor yield a notion of proximity from a state to such constraints.

We propose three key ingredients that enable constraint-parameterization in latent safety filters. First, we specify safety constraints via a \textit{similarity measure} between the embedding of a constraint image and the robot's current latent state; this provides a dense signal of how close the policy is to failure. Then, we \textit{calibrate} the resulting constraint set with conformal prediction~\cite{shafer2008tutorial, angelopoulos2023conformal} to align with an end-user's semantic notion of failure. Lastly, we train the safety filter by treating \textit{any} image in the world model dataset as a possible test-time safety constraint. At runtime, we adapt the latent safety filter by conditioning it on an encoding of a user-specified constraint image, thereby adapting it to the runtime safety specification. 

We evaluate our framework on vision-based safe-control tasks, including a simulated vehicle collision-avoidance domain and real-world object sweeping with a Franka manipulator. Our results highlight four key findings: (1) by parameterizing the safety filter with constraint representations, \anysafe can adapt to arbitrary constraints provided as images; (2) this adaptability does not come at the cost of performance, as for a given constraint, the parameterized safety filter achieves performance comparable to a specialized filter trained solely on that constraint;
(3) \anysafe generalizes to constraints beyond those that specialized safety filters can model; and (4) since \anysafe learns from continuous latent similarity signals, conformal calibration allows us to control how conservatively the robot avoids specified constraints by adjusting the effective size of the failure set.

\begin{figure*}[t!]
\centering
\includegraphics[width=1.0\textwidth]{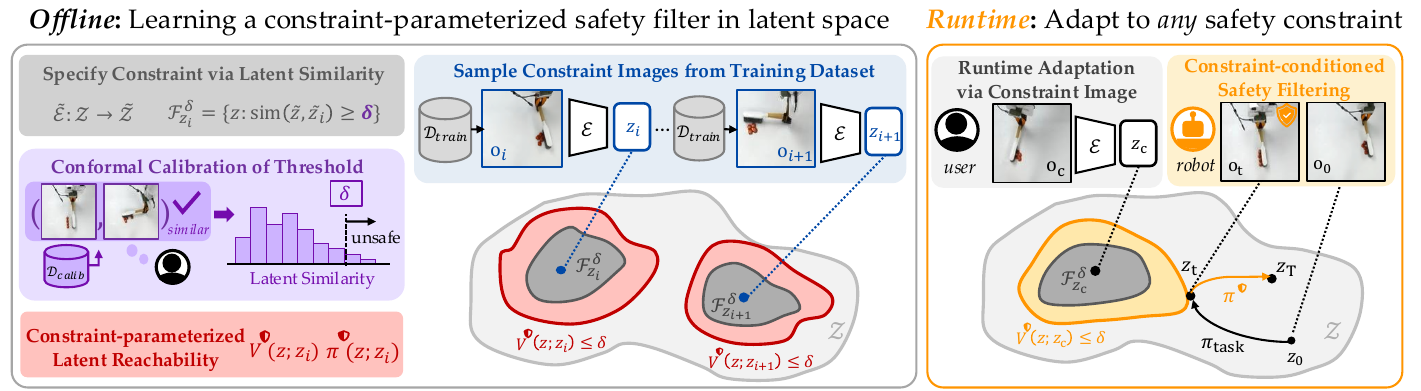}
\caption{\textbf{Framework: Constraint-Parameterized Latent Safety Filter.} \textit{Left}: The constraint-parameterized latent safety filter is trained by sampling constraint images from the WM training dataset, treating any image as a possible test-time safety constraint. Safety constraints are specified using a latent-space similarity measure, with a calibrated threshold that defines the size of the failure set. \textit{Right}: At runtime, the safety filter adapts to any safety constraint with a user-specified image.}
\label{fig:framework}
\vspace{-0.2in}
\end{figure*}
\section{Related Work}

\para{Safety Filtering in Robotics}
Safety filtering is a control-theoretic approach for preventing robotic systems from entering unsafe states~\cite{hsu2023safety, wabersich2023data}. Foundational methods enforce safety by correcting the robot's base control policy using Control Barrier Functions~\cite{ames2016control}, Hamilton–Jacobi (HJ) reachability analysis~\cite{mitchell2005time, margellos2011hamilton}, or model predictive shielding~\cite{bastani2021safe}. Since the key challenge of these methods is the tractable synthesis of a valid safety value (or barrier) function that encodes a set of safe states, recent methods have leveraged self-supervised learning~\cite{bansal2021deepreach} and reinforcement learning (RL)~\cite{fisac2019bridging, hsu2021safety} to scale safety value functions and controllers to high-dimensional nonlinear systems. More recently, latent dynamics models~\cite{hafner2023dreamerv3, zhou2024dino} have been used to compute these safety filters in learned latent spaces~\cite{nakamura2025generalizing, seo2025uncertainty}, enabling safe control directly from high-dimensional image observations.

However, most existing approaches assume that the safety constraint is fixed~\cite{bansal2017hamilton, wabersich2023data, hsu2023safety}, and thus cannot generalize beyond a predefined safety specification. With hand-designed state spaces and dynamics models, safety constraints can be updated online by precomputing families of reachable sets parameterized by environmental or system factors~\cite{borquez2023parameter}, or by incrementally updating safety specifications~\cite{bajcsy2019efficient, santos2025updating}. Recent works introduce observation-conditioned safety filters, which adapt a safety value function for collision-avoidance with current sensor observations~\cite{hsu2023sim, he2024agile, lin2024one}. In contrast, our method operates in the world model's learned latent state space and adapts the safety filter to runtime user-specified constraints, provided as RGB images of undesirable states. 

\para{Learning Conditioned Control Policies}
Conditioned (or parameterized) policies are a well-established way to enable control policies to adapt to arbitrary objectives.
Within the safe control literature, this conditioning has been applied to constraint thresholds~\cite{yao2023constraint}, low-dimensional environmental parameters~\cite{borquez2023parameter}, or high-dimensional LiDAR observations~\cite{he2024agile,lin2024one}, all of which adapt the robot's safety specifications at runtime. More broadly, goal-conditioned reinforcement learning (GCRL) trains agents to achieve diverse user-defined goals by conditioning policies and value functions on goal representations, enabling zero-shot deployment with runtime goal images without further training~\cite{kaelbling1993learning, schaul2015universal}. GCRL relabels past experiences in a self-supervised manner and learn value functions that capture similarities between states~\cite{eysenbach2022contrastive, mendonca2021discovering}. 
Instead of goal-conditioning, we \textit{constraint}-condition the HJ reachability problem with an RL-based solver~\cite{fisac2019bridging} in the imagination of a world model, self-labeling past trajectories as potential safety constraints to make a latent safety filter avoid diverse runtime safety constraints.

\section{Background: Latent Safety Filter}\label{sec:prelim}

We briefly introduce latent safety filters~\cite{nakamura2025generalizing}, which serve as the foundation of our method. A latent safety filter consists of two components: a safety value function \(\monitor: \latentSpace \rightarrow \mathbb{R}\), which measures how close the robot is to inevitable failures, and a safety-preserving policy \(\fallback: \latentSpace \rightarrow \actionSpace\), which steers the robot away from failure. 
In this work, we compute these models via Hamilton–Jacobi (HJ) reachability analysis~\cite{mitchell2005time}. 
The key innovation is that the value function and policy are optimized within the learned latent state representation ($\latent \in \latentSpace$) of a world model learned directly from RGB observations.

\para{Latent States, Dynamics, and Constraints} 
World models~\cite{hafner2019learning, hafner2023dreamerv3, zhou2024dino}
offer a paradigm for learning difficult-to-simulate dynamical systems models directly from raw sensor observations by jointly inferring a lower-dimensional latent state $\latent \in \latentSpace$ and its associated dynamics $\dynamics_\latent$. These models are trained with an offline dataset of robot–environment interactions, \(\dataset_\train := \{(\obs_t, \action_t, \latentfailuremargin_t)_{t=1}^T\}_{i=1}^{N_\train}\). Here, each trajectory consists of high-dimensional observations \(\obs \in \obsSpace\) (e.g., proprioception and RGB images), robot actions \(\action \in \actionSpace\), and failure labels \(\latentfailuremargin \in \{-1, 1\}\) that indicate the presence of visible failures from an observation. (e.g., the contents of a bag being spilled~\cite{nakamura2025generalizing}, or objects toppling into a sensitive region~\cite{seo2025uncertainty}). Note that the failure is modeled as a binary classification, where the notion of failure is assumed to be fixed at training time and remains unchanged at deployment.

The world model consists of an encoder $\encoder$ that maps an observation and a prior latent state $\hat{\latent} \in \latentSpace$ into the posterior latent representation $\latent \in \latentSpace$, and a latent dynamics model $\dynamics_z$ that predicts the next latent state conditioned on an action.\vspace{-0.05in}
\begin{equation}
\begin{aligned}
        \text{Encoder: } &\latent_t \sim \encoder(\latent_t \mid \hat{\latent}_t, \obs_t)  \\
        \text{Latent Dynamics: } &\hat{\latent}_t \sim \dynz(\hat{\latent}_t \mid \latent_{t-1}, \action_{t-1}) \\
        \text{Failure Classifier: } &\latentfailuremargin_t = \ellz(\latent_t), \label{eq:world_model}
\end{aligned} \vspace{-0.05in}
\end{equation}
where $\ellz(\latent_t)$ models the safety constraints as a classifier; it returns whether the $\latent$ is in failure or not. This formulation describes a wide range of world models~\cite{hafner2019learning, hafner2023dreamerv3, hansen2024tdmpc2, zhou2024dino, assran2025vjepa2}.

\para{Latent Safety Filter with a \textit{Fixed} Failure Set}
With a binary classifier $\ellz$ learned on the latent space, a fixed safety constraint can be represented as a \textit{failure set} $\failure := \{\latent : \ellz(z) \leq 0 \} \subset \latentSpace$ encoded via the zero-sublevel set of the failure margin function. A latent safety filter for the \textit{fixed} failure set can then be constructed by performing Hamilton–Jacobi (HJ) reachability analysis~\cite{mitchell2005time, hsu2023safety} in the latent space. Using the latent imagination of a pretrained world model as the environment dynamics, the safety filter is learned by solving the fixed-point safety Bellman equation~\footnote{This includes an expectation over transitions for stochastic dynamics (e.g., RSSM~\cite{hafner2019learning}) but can be removed for deterministic ones (e.g.,\cite{zhou2024dino}).}:
\begin{equation}\label{eqn:discounted-safety-bellman}
\begin{aligned}
    \monitor(\latent_t) &= (1-\gamma)\ellz(\latent_t) \\ & + \gamma \min \Big\{ \ellz(\latent_t), \max_{\action_t \in \actionSpace} \mathbb{E}_{\hat{\latent}_{t+1} \sim \dynz(\cdot \mid \latent_t, \action_t)}\monitor\big( \hat{\latent}_{t+1}\big) \Big\}, 
    \\
    \fallback(\latent_t) &= \arg\max_{\action \in \actionSpace} 
    \mathbb{E}_{\hat{\latent}_{t+1} \sim \dynz(\cdot \mid \latent_t, \action)} 
        \monitor\big(\hat{\latent}_{t+1}\big).
\end{aligned}
\end{equation} where $\gamma \in [0,1)$ is a time discounting factor ensuring a contraction mapping~\cite{fisac2019bridging}. 
We note that, unlike typical RL, which maximizes the cumulative reward, this optimization performs a \textit{min-over-time} to remember if the trajectory ever entered the failure set. Thus, $\monitor(\latent) < 0$ indicates that the robot is doomed to enter the failure set if starting from $\latent$, whereas $\monitor(\latent) \geq 0$ implies there exists a safety-preserving action (e.g., provided by $\fallback$) that can prevent future failure.

At runtime, this learned safety filter can be deployed to safeguard any arbitrary task policy $\policyTask$ with respect to a \textit{fixed} safety constraint. Given the current latent representation $\latent$ (embedded from the current observations) and the action proposed by the task policy, the safety filter evaluates the sign of $\monitor$ of the next state $\latent' = \dynz(\latent, \policyTask)$ as a safety monitor. Based on this evaluation, the filter either allows the action from $\policyTask$ to proceed or overrides it with the fallback policy:
$\action^\text{exec} = \mathds{1}_{\{\monitor(\latent) > 0\}} \cdot \policyTask + \mathds{1}_{\{\monitor(\latent) \le 0\}} \cdot \fallback(\latent)$.

\section{Constraint-Parameterized \\ Latent Safety Filter}\label{sec:constraint_parameterized_latent_safety_filter}

In this section, we generalize the \textit{fixed} latent safety filter by \textit{parameterizing} it on a constraint representation, $\latentConstraint$, obtained by encoding images of constraints that an end-user cares to prevent, yielding $\monitor(\latent; \latentConstraint)$ and $\fallback(\latent; \latentConstraint)$. We describe three key ingredients--a latent similarity measure, calibration, and training--that enable constraint parameterization of latent safety filters, thereby making them adaptable at test time. The overall framework is described in Fig.~\ref{fig:framework}

\para{Learning a Similarity Measure Over Failures}
Recall how the fixed latent safety filter requires a classifier for a specific constraint. To generalize beyond a fixed classifier, we propose
utilizing a dense similarity measure  $\ellz(\latent; \latentConstraint)$ in the latent space, which represents how close $\latent$ is to the specified representation $\latentConstraint$, thereby enabling any constraint embedding to be used as a possible safety constraint. 
For example, a natural choice could be cosine similarity between any world model state, $\latent$,  and the constraint embedding, $\latentConstraint$. 
However, as we find in Sec.~\ref{sec:exp-dubins-adapt} and Sec.~\ref{sec:exp-runtime-failure}, using the raw embeddings from the world model to compute this similarity measure is often not sufficiently informative nor aligned with an end-user's understanding of similarity. 
To address this, we train a projector $\semanticEncoder: \latentSpace \rightarrow \latentSemanticSpace$ on top of the latent space, which maps the world model's representation $\latent \in \latentSpace$ into a failure-relevant latent representation $\latentSemantic \in \latentSemanticSpace$:
\begin{align}\label{eq:failure-projector}
\text{Failure Projector: }&\latentSemantic = \semanticEncoder(\latent),\\
\text{Latent Failure Margin: }&\tilde{\ell}_z(\latent; \latentConstraint) := -\similarity(\latentSemantic, \latentSemanticConstraint).
\end{align} 
This ensures that the similarity $\similarity(\latentSemantic, \latentSemanticConstraint)$ provides a better aligned measure of meaningful distances in the latent space, enabling safety specifications to be represented by a dense latent failure margin function.

In general, this projector can be trained in various ways to align failure-relevant features, such as supervised metric learning~\cite{kim2020proxy}, representation learning~\cite{eysenbach2022contrastive}, or alignment~\cite{tian2024what}. In this work, we adopt a simple supervised learning so that similarity in the projected latent space reflects proximity to failure-relevant features\footnote{A thorough comparison or the design of novel training objectives for metric-space similarity measures is beyond the scope of this work.}.

Ultimately, safety constraints depend on an embedding $\latentConstraint$ and are modeled via the $\delta$-sub-level set (left, Fig.~\ref{fig:framework}):
\begin{equation}\label{eq:conditioned-failure-set}
    \failure_{\latentConstraint}^{\delta} := \{\latent : \tilde{\ell}_z(\latent; \latentConstraint) \le \delta \} = \{\latent : -\similarity (\latentSemantic, \latentSemanticConstraint) \le \threshold \},
\end{equation} where $\similarity(\cdot, \cdot)$ is the cosine similarity between two vectors and $\threshold$ is a threshold for how similar a latent state must be to the constraint embedding to be also considered a failure.

\para{Calibrating the Similarity-based Latent Failure Set}\label{sec:calib}
Recall that the latent failure set defined in Eqn.~\eqref{eq:conditioned-failure-set} depends on the threshold $\delta$, which determines its effective size. 

We calibrate the threshold $\delta$ that defines the size of the failure set to align with a user's notion of failure set size, thereby controlling how closely the robot is permitted to approach the constraint representation, $\latentConstraint$ (middle left, Fig.~\ref{fig:framework}).

Specifically, we employ Conformal Prediction (CP), a distribution-free statistical method~\cite{shafer2008tutorial, angelopoulos2023conformal}. Using a held-out calibration dataset that reflects the user’s understanding of failures, we aim to provide a recall guarantee for detecting failures conditioned on a representation~\cite{lin2023generating}. 
The calibration dataset consists of latent pairs with ground-truth labels, $\dataset_\calib := \big\{\big((\latent_j, \latent_j'), \gtclass_j \big)\big\}_{j=1}^{N_\calib},$ where $\gtclass_j \in \{0, 1\}$ indicates whether the pair of representations are similar. We then employ class-conditioned CP~\cite{chakraborty2024enhancing, seo2025uncertainty} to guarantee the recall of failures at a user-specified confidence level $\alphaCalibration \in [0,1]$:
\begin{equation}\label{eq:class-conditioned-guarantee}
\mathbb{P}\left( \tilde{\ell}_z(\latent_\test; \latent_\test') \leq \threshold \mid \gtclass_\test = 1 \right) \geq 1 - \alphaCalibration.
\end{equation} 
This is implemented by only using positive latent pairs from the same class (i.e., $\gtclass_i = 1$) and defining the conformal nonconformity score as $-\similarity(\latentSemantic_j, \latentSemantic_j')$. The threshold $\threshold$ is then chosen as the $(1 - \alphaCalibration)$-quantile of the set $\{-\similarity(\latentSemantic_j, \latentSemantic_j')\}_{i=1}^{N}$, obtained by selecting the $\lceil (1 - \alphaCalibration)(N + 1) \rceil$-th smallest value, where $N$ denotes the number of positive pairs in the calibration dataset. 
This class-conditioned CP guarantees the recall of failure~\cite{chakraborty2024enhancing}, $\mathbb{P}\!\left(\latent_\test \in \failure_{\latent_\test'}^{\delta} \mid \gtclass_\test = 1 \right) \;\geq\; 1 - \alphaCalibration$.

\para{Training Constraint-Parameterized Latent Safety Filter}
Finally, we train a constraint-parameterized latent safety filter entirely within the latent imagination of the world model. During training, we treat \textit{any} observation from the world model dataset as a candidate failure we could see at test-time; we randomly sample observations, encode them into $\latentConstraintTraining$, and solve the constraint-parameterized fixed-point safety Bellman equation conditioned on them (top middle, Fig.~\ref{fig:framework}):
\begin{equation}\label{eqn:discounted-safety-bellman-metric}
\begin{aligned}
    \monitor(\latent_t; \latentConstraintTraining) &= (1-\gamma)\,\tilde{\ell}_z(\latentSemantic, \latentSemanticConstraintTraining) \\ + \gamma \min &\Big\{\,\tilde{\ell}_z(\latentSemantic, \latentSemanticConstraintTraining), \max_{\action_t \in \actionSpace} \mathbb{E}_{\hat{\latent}_{t+1} \sim \dynz(\cdot \mid \latent_t, \action_t)}\monitor\big( \hat{\latent}_{t+1}; \latentConstraintTraining \big) \Big\}, 
    \\ \fallback(\latent; \latentConstraintTraining) &= \arg\max_{\action \in \actionSpace} \mathbb{E}_{\hat{\latent}_{t+1} \sim \dynz(\cdot \mid \latent_t, \action)} \monitor( \hat{\latent}_{t+1} ; \latentConstraintTraining),
\end{aligned}
\end{equation} 
where $\gamma \in [0,1)$ is a time discounting factor similar to \eqref{eqn:discounted-safety-bellman}. 

Intuitively, the safety value function $\monitor(\latent_t; \latentConstraintTraining)$ measures how close the robot, starting from $\latentSemantic_t$, comes to the failure representation $\latentSemanticConstraintTraining$ in $\latentSemanticSpace$ despite its best-effort safety policy $\fallback(\latent_t; \latentConstraintTraining)$ to minimize similarity with that representation. As $\latentConstraintTraining$ is sampled randomly from the training dataset, the safety filter can treat either one of these or a newly interpolated representation as a potential failure at test time. 

\para{Runtime Constraint-Parameterized Safety Filtering}
At runtime, a user provides a constraint image $\obs_c$, which is encoded into a constraint representation $\latentConstraint = \encoder(\obs_c)$ and used to adapt the safety filter to the specified constraint (right, Fig.~\ref{fig:framework}). The safety value function evaluates whether the action proposed by the task policy would inevitably enter the latent-space failure set defined by the calibrated threshold, and intervenes with the safety-preserving policy if necessary: 
\begin{equation}\label{eq:latent-conditioned-safe-control}
\action^\text{exec}_{\latentConstraint} :=
\begin{cases}
    \policyTask, & \text{if } \monitor\big(\dynz(\latent, \policyTask); \latentConstraint\big) > \threshold, \\[6pt]
    \fallback(\latent; \latentConstraint), & \text{otherwise}.
\end{cases}
\end{equation} 
Intuitively, the safety filter ensures that the observations--and the corresponding latent states---are \textit{dissimilar} enough (by a $\delta$ margin) to the conditioned runtime safety constraint.

Note that in \eqref{eq:latent-conditioned-safe-control}, we apply the calibrated threshold directly to the value function $\monitor$, while the guarantee in \eqref{eq:class-conditioned-guarantee} only ensures that the similarity measure $\tilde{\ellz}$ is calibrated. Approximate value function solvers (e.g., RL~\cite{fisac2019bridging}) can induce errors, but directly calibrating the value function requires stronger assumptions about access to ground-truth unsafe set labels~\cite{lin2023generating}. We thus apply the similarity-calibrated threshold $\delta$ directly to the value function at runtime, assuming that approximation errors are marginal. Importantly, this enables calibration to be performed post hoc, without the need to retrain the safety filter for different calibration results.\footnote{In the \href{https://any-safe.github.io/}{Appendix}, we prove that with a perfect value function solver, applying the calibrated threshold $\delta$ directly to the value function yields the same unsafe set as solving a threshold-dependent value function.}

\section{Simulation Results}\label{sec:dubins} 

We first conduct experiments with a low-dimensional, benchmark collision-avoidance navigation task where privileged information about the state, dynamics, safe set, and safety controller is available. We focus on the following questions: (\romannumeral 1) Can \anysafe adapt to diverse test-time safety constraints? (\romannumeral 2) Does calibration of \anysafe correctly align the safety filter with the user’s understanding of failure?

\subsection{Experimental Setup}

\para{Privileged Dynamics: 3D Dubins' Car} Let discrete-time dynamics with privileged state be \(\state = [p^x, p^y, \theta]\) $\state_{t+1} = \state_t + \Delta t\, [v\cos(\theta_t),\, v\sin(\theta_t),\, a_t],$ where the robot's action is continuous angular velocity \( a_t \in \mathcal{A} = [-\action_\text{max}, \action_\text{max}]\) with $\action_\text{max} = 1.25$ rad/s,  while the longitudinal velocity is fixed \(v = \SI{1}{m/s}\). The time discretization is \(\Delta t = \SI{0.05}{s}\).

\para{World Model}
We adopt Dreamer~\cite{hafner2023dreamerv3} with the latent dynamics model of the Recurrent State Space Model (RSSM)~\cite{hafner2019learning} with continuous latents. The world model is trained using an offline dataset of $N_\text{train} = 4,000$ observation–action trajectories collected without failure labels. Each observation is a $3\times128 \times 128$ image of the environment and vehicle (Fig.~\ref{fig:dubins}), while actions are randomly sampled during trajectory generation. Each trajectory terminates after $T = 100$ timesteps or earlier if the ground-truth $x$ or $y$ coordinate leaves the environment bounds of $[\SI{-1.5}{m}, \SI{1.5}{m}]$. 

\para{Failure Projector \& Calibration}
We train the failure projector, implemented as a 2-layer MLP, sampling from the world model training dataset to construct $\{(z_{i1}, z_{i2}, s_i)\}$. The ground-truth similarity score is defined as $s_i = \max[1 - \frac{1}{\sqrt{2}} \{(p^x_{i1} - p^x_{i2})^2 + (p^y_{i1} - p^y_{i2})^2\}, -1.0],$ based on the ground-truth robot positions. The failure projector is then trained with the mean-square error (MSE) loss $\tilde{\mathcal{L}}_i =  \Big( \similarity(\latentSemantic_{i1}, \latentSemantic_{i2}) - s_i \Big)^2.$ The calibration dataset consists of $N_\text{calib} = 3,000$ held-out images labeled with robot positions. Pairs within $\epsilon=\SI{0.5}{m}$ are labeled positive, and used to compute the threshold $\delta$ with $\alpha=0.005$.

\begin{figure}[t]
    \centering \includegraphics[width=1.0\linewidth]{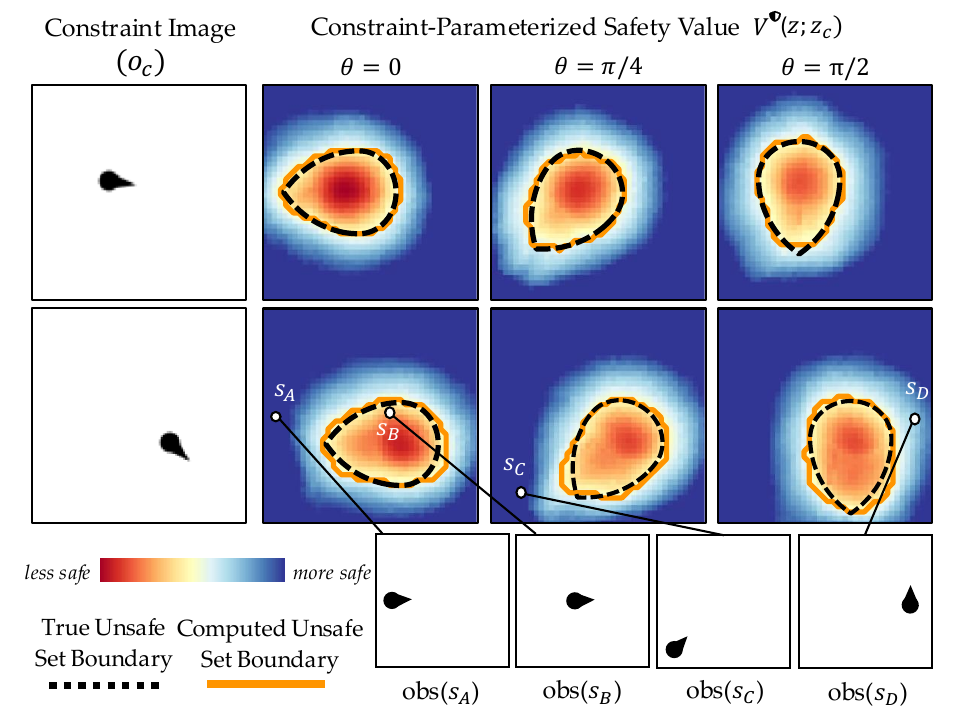}
    \caption{\textbf{Dubins' Car Qualitative Results.} We visualize the safety value function parameterized on two different constraints, shown at heading slices $\theta \in \{0, \pi/4, \pi/2\}$.}
    \label{fig:dubins}
    \vspace{-0.05in}
\end{figure}

\begin{table}[t]
    \Large
    \centering
    \resizebox{1.0\linewidth}{!}{
    \renewcommand{\arraystretch}{1.3}{
    \begin{tabular}{l|ccc|ccccccc}
        \toprule
        \textbf{Method} & 
        $\dynz$ & $\ellz$ & $\tilde{\ell}_z$ &
        \textbf{FPR$\downarrow$} & \textbf{Rec.$\uparrow$} & \textbf{Pre.$\uparrow$} & \textbf{\boldmath$F_1\uparrow$} & \textbf{B.Acc.$\uparrow$} & \textbf{Safe Rate$\uparrow$} \\
        \midrule
        \textit{Privileged-Fix} & 
         \xmark & \xmark & \xmark & 
        0.030 & 0.982 & 0.991 & 0.987 & 0.976 & 0.960\\
        \textit{Privileged-Any} & \xmark & \xmark & \xmark &  0.050 & 0.977 & 0.986 & 0.982 & 0.964 & 0.988 \\  \hline 
        \textit{Latent-Fix}& \cmark & \cmark & \xmark & 0.041 & 0.971 & 0.988 & 0.980 &	0.965 & 0.908\\
        \textit{Latent-Fix-Cont}&  \cmark & \xmark & \cmark &  0.080 & 0.972 & 0.977 & 0.975 & 0.946 & 0.904\\
        \textit{Anysafe (w.o. Proj)} & \cmark & \xmark & \xmark & 0.480 & 0.991 & 0.881 & 0.933 & 0.755 & 0.836\\
        \textit{\ours} & \cmark & \xmark & \cmark & 0.082 & 0.966 & 0.977 & 0.971 & 0.942 & 0.924\\
        \bottomrule
    \end{tabular}
    }
    }
    \caption{\textbf{Comparison of Safety Filter in Dubins' Car.} \ours accurately adapts to different safety constraints while maintaining safety performance comparable to a filter trained for a single safety constraint.}
    \label{tab:dubins_results}
    \vspace{-0.25in}
\end{table}

\para{Latent Safety Filter Setup}
We use DDPG~\cite{lillicrap2015continuous} as our solver for computing the latent safety filter. We randomly sample initial robot observations and the failure observations from the world model training data and encode both into the latent state. For runtime filtering \eqref{eq:latent-conditioned-safe-control}, we add a small margin ($0.1$) to the calibrated threshold.

\para{Evaluation \& Metrics} In this low-dimensional example, we have access to the ground-truth dynamics and can compute a high-quality ``ground-truth'' safety value function using grid-based methods~\cite{mitchell2007toolbox}. This enables us to directly evaluate the safety monitor $\monitor$’s classification accuracy across all three state dimensions. We evaluate the quality of the safety value function conditioned on $50$ different constraint images. We measure the classification accuracy across all three state dimensions. To assess the performance of the fallback policy $\fallback$, we roll out the learned policies from $250$ ground-truth safe initial states and measure the safety rate by checking whether the safety policy ensures the robot never enters the ground-truth failure set.

\subsection{Can \anysafe Adapt to Diverse Safety Constraints?}\label{sec:exp-dubins-adapt}

We first study whether \anysafe can be parameterized on diverse constraints without sacrificing safety performance.

\para{Baselines}
We compare \anysafe against the following baselines. \textit{Privileged} baselines train safety filters using RL with privileged ground-truth states~\cite{fisac2019bridging}, with a single circular failure set of radius $\epsilon = 0.5$ centered at $(\SI{0}, \SI{0})$ (\textit{Privileged-Fix}), or by parameterizing with the position of the constraint in a privileged state-space (\textit{Privileged-Any}). 
\textit{Latent} baselines use the world model, raw image observations as input, and center the failure set in the middle of the plane.
\textit{Latent-Fix}~\cite{nakamura2025generalizing} employs a single failure classifier while \textit{Latent-Fix-Cont} uses a continuous similarity signal $\tilde{\ell}_z$ to the representation of an image with the vehicle at the center. Lastly, \textit{Anysafe (w.o. Proj)} solves \eqref{eqn:discounted-safety-bellman-metric} using raw latent similarities without the failure projector. For baselines restricted to a single safety constraint (the \textit{*-Fix} variants), we report performance only against the ground-truth center failure region.
  
\para{Results} Qualitative results in Fig.~\ref{fig:dubins} show that \ours adapts to diverse failure sets when conditioned on different constraint representations, constructing accurate constraint-parameterized unsafe sets. Quantitative results in Table~\ref{tab:dubins_results} first confirm that latent safety filters achieve performance comparable to safety filters trained with privileged states, consistent with \cite{nakamura2025generalizing}. 
In both privileged-state (\textit{Privileged-Any}) and latent-state (\ours) settings, conditioning on failure states during training enables the safety filter to generalize to arbitrary failure sets. Furthermore, \ours achieves safety monitor performance and safe rates comparable to those of a filter trained for a single fixed failure set ({\textit{Latent-Fix}}), while maintaining the ability to adapt to arbitrary safety constraints. Moreover, \anysafe without the failure projector (\textit{Anysafe (w.o. Proj)}) performs significantly worse with low accuracy and safe rate. This confirms our hypothesis that the raw latent space of the world model is not well aligned for detecting failures and that the failure projector is necessary for effective parameterization of the latent safety filter.

\begin{table}[t]
    \centering
    \resizebox{1.0\linewidth}{!}{
    \renewcommand{\arraystretch}{1.2}{
    \begin{tabular}{l|cccccc}
        \toprule
         \textbf{Method} & \textbf{FPR$\downarrow$} & \textbf{Recall $\uparrow$} & \textbf{Pre.$\uparrow$} & \textbf{\boldmath$F_1\uparrow$} & \textbf{B.Acc.$\uparrow$} & \textbf{Safe Rate$\uparrow$}\\
        \midrule
        $\latentSpace \times \latentSpace$ & 0.082 & 0.966 & 0.977 & 0.971 & 0.942 & 0.924\\
        $\latentSpace \times \mathcal{P}$ & 0.221 & 0.912 & 0.946 & 0.929 & 0.845 & 0.904\\
        $\latentSpace \times \latentSemanticSpace$  & 0.064	& 0.933 & 0.981 & 0.957 & 0.935 & 0.828\\
        $\latentSemanticSpace \times \latentSemanticSpace$ & 0.113 & 0.910 & 0.967 & 0.938 & 0.899 & 0.504\\
        \bottomrule
    \end{tabular}
    }}
    \caption{\textbf{Ablation: Parameterization Strategies.} The constraint-parameterized latent safety filter shows the best performance when parameterized with latent representations, with constraints randomly sampled from the training dataset.}
    \label{tab:dubins_ablations}
    \vspace{-0.25in}
\end{table}

\paranopunc{\textit{Ablation}: How should the latent safety filter parameterize the safety constraint?} We hypothesize that sampling constraint representations from the world model training data enables effective parameterization of the latent safety filter, by exposing it to possible test-time constraints and allowing it to learn to interpolate between them. To test this, we ablate the conditioning strategy for both training \eqref{eqn:discounted-safety-bellman-metric} and runtime filtering \eqref{eq:latent-conditioned-safe-control}. We compare four variants: (\romannumeral 1) $(\latent;\,\latentConstraint) \in \latentSpace \times \latentSpace$, which conditions directly on latent states and constraint representations; (\romannumeral 2) $(\latent;\,\prototype) \in \latentSpace \times \mathcal{P}$, which samples only from a small set of ``prototypes ''of the latent representations $\mathcal{P} \subset \latentSpace$ during training and uses the nearest prototype to the embedding of a user-specified constraint image at runtime. Prototypes are computed via K-means clustering with 9 cluster centers. (\romannumeral 3) $(\latent;\,\latentSemanticConstraint) \in \latentSpace \times \latentSemanticSpace$, which conditions the safety filter on projected latent representations; and (\romannumeral 4) $(\latentSemantic;\,\latentSemanticConstraint) \in \latentSemanticSpace \times \latentSemanticSpace$, which conditions both the current state and the projected latent failure representation.

Table~\ref{tab:dubins_ablations} shows that the latent safety filter performs best when conditioning on the original latent states. Conditioning on diverse failure representations yields high-quality value functions without sacrificing performance, whereas conditioning on a restricted subset $\mathcal{P} \subset \latentSpace$ fails to generalize across diverse failure sets. Notably, conditioning on projected latent representations underperforms, as the projection omits critical state information (e.g., angles) required to accurately recover unsafe sets, preserving only features needed for similarity measures (e.g., positions).

\begin{figure}[t]
    \centering \includegraphics[width=1.0\linewidth]{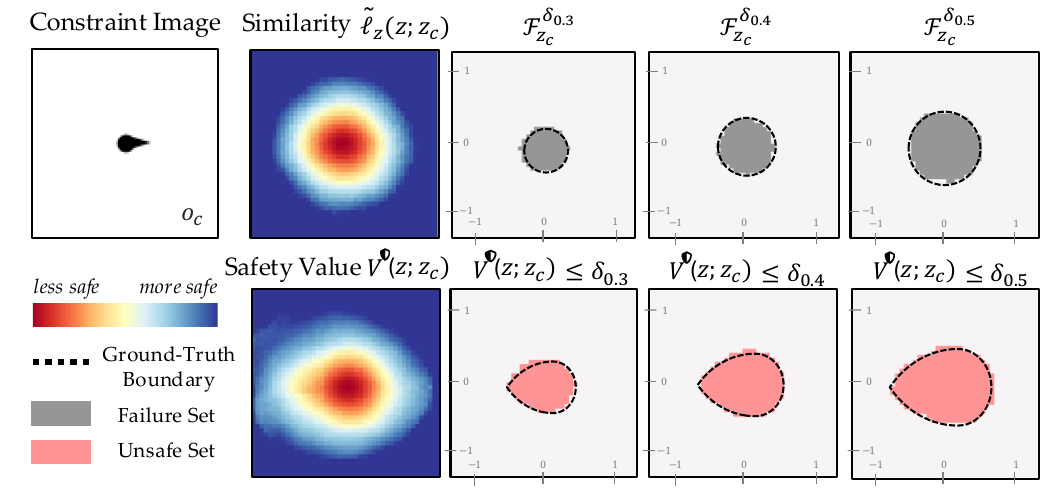}
    \caption{\textbf{Qualitative: Thresholds Calibrated with Different Datasets.} We visualize the failure and unsafe sets constructed with thresholds calibrated from different datasets. \anysafe can adjust the size of the failure sets through calibration.}
    \label{fig:dubins_calib}
    \vspace{-0.1in}
\end{figure}

\begin{table}[t]
    \Large
    \centering
    \renewcommand{\arraystretch}{1.3}
    \resizebox{1.0\linewidth}{!}{
    \begin{tabular}{c|ccccccc}
        \toprule
        \multicolumn{1}{c|}{\textbf{Threshold}} & \textbf{FPR$\downarrow$} & \textbf{Recall $\uparrow$} & \textbf{Pre.$\uparrow$} & \textbf{\boldmath$F_1\uparrow$} & \textbf{B.Acc.$\uparrow$} & \textbf{Safe Rate$\uparrow$} & \textbf{Min. Dist} \\
        \midrule
        \multicolumn{1}{c|}{$\delta=\delta_{0.3}$} & 0.145 & 0.986 & 0.986 & 0.982 & 0.917 & 0.908 & 0.377\\
        \multicolumn{1}{c|}{$\delta=\delta_{0.4}$} & 0.111 & 0.974 & 0.981 & 0.977 & 0.932& 0.880 & 0.477\\
        \multicolumn{1}{c|}{$\delta=\delta_{0.5}$} &  0.082 & 0.966 & 0.977 & 0.971 & 0.942 & 0.924 & 0.569\\
        \bottomrule
    \end{tabular}
    }
    \caption{\textbf{Quantitative: Thresholds Calibrated with Different Datasets.} The same safety filter is used with varying thresholds calibrated from different datasets. Minimum distances to the failure set are measured using ground-truth states and averaged over trajectories. 
    }
    \label{tab:dubins_calib}
    \vspace{-0.2in}
\end{table}

\subsection{Does Calibration Adaptively Control the Failure Set Size?} 

Since \anysafe specifies safety constraints based on continuous latent similarities, the conformal calibration defines the effective size of the failure set and, thus, the unsafe set. To evaluate the conformal calibration process, we test whether the calibrated threshold leads to failure and unsafe sets that align with the user’s understanding of the safety constraints.

\para{Setup} We construct three calibration datasets of equal size, but with labels defined under different criteria for detecting failures, $ y_i = \mathds{1}\!\left\{ (p^x_{i1} - p^x_{i2})^2 + (p^y_{i1} - p^y_{i2})^2 < \epsilon \right\}.$ Specifically, we build $D_\calib^{0.3}, D_\calib^{0.4}, D_\calib^{0.5}$ using thresholds $\epsilon \in \{0.3, 0.4, 0.5\}$, and compute respective $\{\delta_{0.3}, \delta_{0.4}, \delta_{0.5}\}$.

\para{Results} Fig.~\ref{fig:dubins_calib} shows that unsafe sets of different sizes can be accurately estimated using different calibration datasets. The value function learned from continuous similarity signals produces a smooth estimate of the worst-case similarity to the safety constraint. By applying calibrated thresholds to this value function, the safety filter can construct different-sized unsafe sets for runtime filtering. Table~\ref{tab:dubins_calib} further confirms that calibration enables accurate construction of unsafe sets of varying sizes, effectively constraining how closely the system can approach a safety constraint. The calibrated thresholds yield a high-quality value function with high accuracy. It also maintains a high safety rate during rollouts and ensures that minimum distances to the failure position remain above the desired value, which is the radius $\epsilon$ plus a small margin (0.1) that we use for runtime filtering.

\section{Hardware Results: Vision-based Sweeping with a Robotic Manipulator}\label{sec:hardware} 

We scale \anysafe to a real-world visual manipulation task using a Franka Research 3 arm equipped with a third-person camera. The robot is tasked with sweeping small objects on a table using a brush, while avoiding a constraint specified at runtime with an image showing objects in the failure region.

\subsection{Experimental Setup}
\para{World Model} Following \cite{nakamura2025generalizing}, we adopt DINO-WM~\cite{zhou2024dino} as the latent dynamics model, and DINOv2~\cite{oquab2023dinov2} as the encoder. We record $3 \times 244 \times 244$ RGB images at $15$ Hz, along with the end-effector poses. Actions control only the end-effector’s $x-y$ positions and yaw angle. For training the world model, we collect 1,300 trajectories: 300 from teleoperation and 1,000 from random Gaussian-sampled actions.

\begin{figure*}[t!]
    \centering \includegraphics[width=1.0\textwidth]{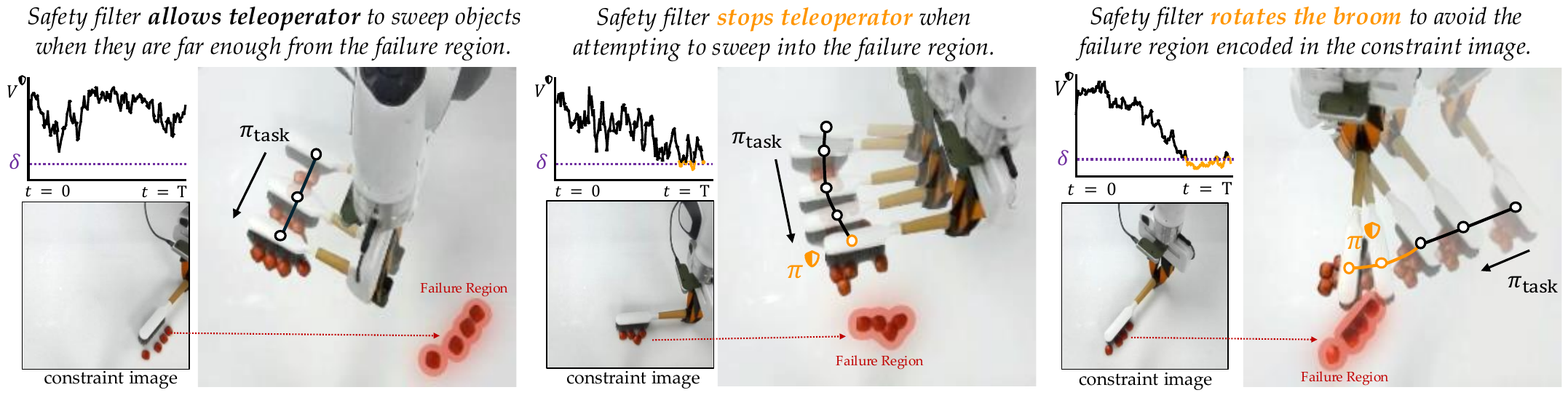}
    \caption{\textbf{Qualitative Results: Filtering $\policyTask$ on Hardware.} \anysafe adapts the safety filter based on the different constraint images, intervening only when the chocolates come close to the runtime failure regions indicated in the constraint images.}
    \label{fig:chocolates}
    \vspace{-0.2in}
\end{figure*}

\para{Failure Projector \& Calibration}
Since the safety specification for this task depends on the positions of objects on the table, labels are generated from similarity scores computed using the pixel coordinates of the objects’ centers of mass. We train a failure projector with $300$ labeled trajectories, sampling latent pairs of images and optimizing an MSE loss as in Sec.~\ref{sec:dubins}. For calibration, we construct a dataset of $N_\text{calib} = 4,000$ labeled images, from which pairs are sampled, and the threshold is then calibrated with $\alpha = 0.1$.

\para{Latent Safety Filter Setup}
We use DDPG~\cite{lillicrap2015continuous} as the solver for the latent safety filter, sampling initial states and constraints from the world model training data.

\begin{figure}[t!]
    \centering \includegraphics[width=1.0\linewidth]{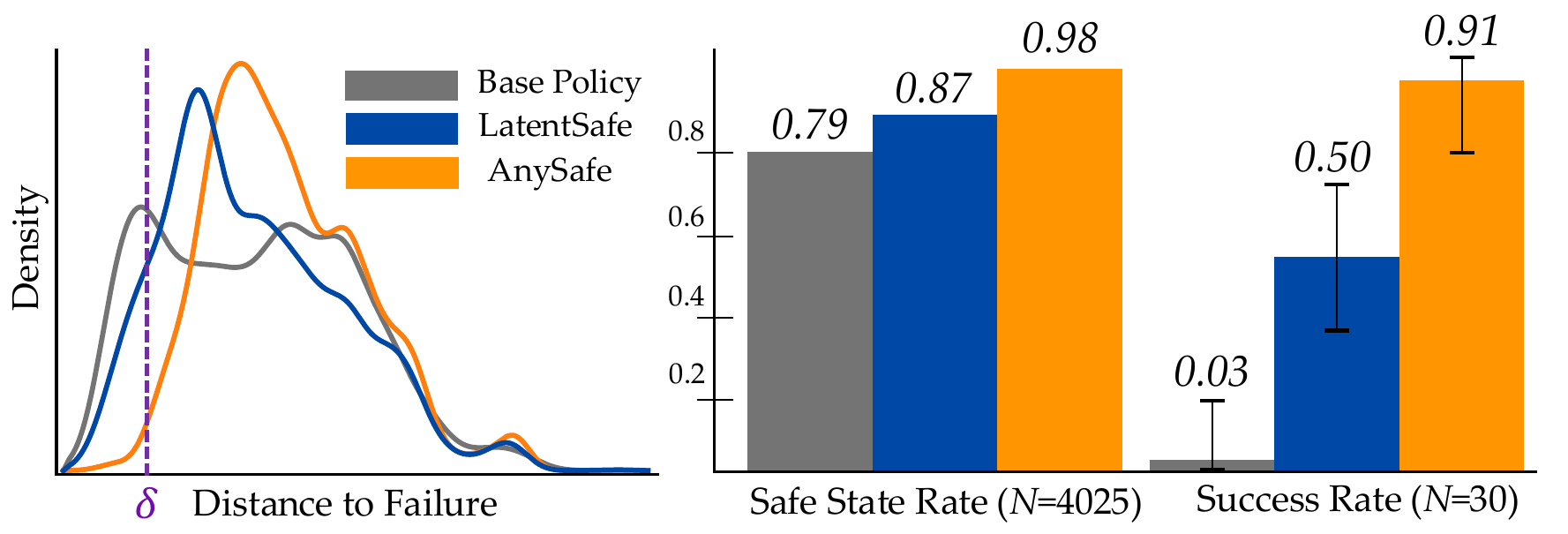}
    \caption{\textbf{Quantitative Results.} \textit{Left}: Empirical distribution of distances between the current object positions and those in the failure images. \textit{Right}: Portion of states outside the failure set and success rate. \ours keeps distances from the runtime failure region above the threshold.
    }
    \label{fig:chocolates_graph}
    \vspace{-0.2in}
\end{figure}

\subsection{Can \anysafe Safeguard a Runtime-Specified Constraint?} 
\label{sec:exp-runtime-failure}

\para{Setup} The task policy $\policyTask$ is human teleoperation, where the teleoperator controls the end-effector’s pose while being filtered by \anysafe. At runtime, the failure representation $\latentConstraint$ is specified by an image depicting an undesirable outcome, such as placing objects within the failure region.

\para{Results: Qualitative} As illustrated in Fig.~\ref{fig:chocolates}, the teleoperator can freely sweep the objects when they are sufficiently far from the safety constraint specified at runtime. When the objects approach the region indicated by the constraint image, however, the safety filter detects the proximity and prevents entry into that region. This safety behavior is conditioned on the constraint representation, showing adaptive behavior for different safety constraints (middle and right of Fig.~\ref{fig:chocolates}). 

\para{Results: Quantitative} We record $30$ trajectories with different safety constraints, each sweeping the objects close to the failure region specified by the constraint image. We then replay these action sequences as $\policyTask$ from the same initial states, with the safety filter adapted to each safety constraint. We compare against \latentsafe~\cite{nakamura2025generalizing}, which trains three separate safety filters for different safety constraints using binary classifiers, each corresponding to one of three distinct regions. At test time, the constraint image is assigned to one of these regions, and the corresponding safety filter is applied to the base policy. Fig.~\ref{fig:chocolates_graph} shows that \ours achieves higher success rates and consistently keeps the distances to the failure regions below the thresholds ($<2\%$). In contrast, \latentsafe fails to satisfy arbitrary safety constraints, highlighting its limited adaptability.

\begin{figure}
    \centering
    \includegraphics[width=1.0\linewidth]{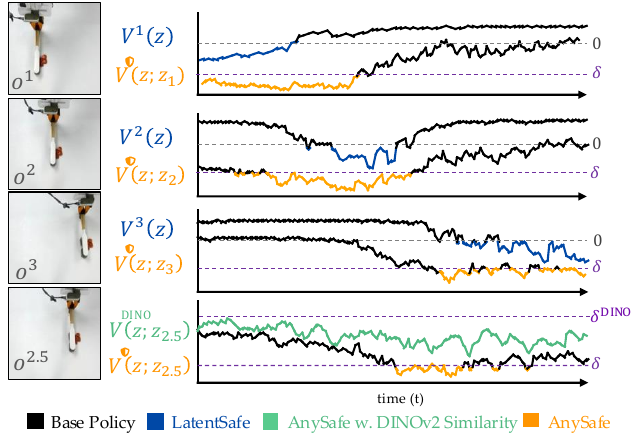}
    \caption{\textbf{Parameterized~(\ours) vs Fixed~(\latentsafe) Safety Filter.} \textit{Left}: Images of three constraints ($o^1, o^2, o^3$) corresponding to each fixed filter. \anysafe adapts to a new constraint, $o^{2.5}$, in between 2 and 3 at test time. \textit{Right}: Safety value functions of the fixed and the parameterized filter. 
    }
    \label{fig:arbitrary_constraint}
    \vspace{-0.25in}
\end{figure}

\subsection{\anysafe's  Generalization Beyond Fixed Safety Filters}\label{exp:generalization}
We next ask whether a single \ours can generalize across diverse constraints better than multiple fixed filters. We compare \ours to three fixed filters (\latentsafe), each specialized on a different failure region with its own failure classifier. We measure the safety values of each method during a replayed open-loop trajectory that sweeps objects through the three failure regions.

\para{Results} Fig.~\ref{fig:arbitrary_constraint} shows that when \ours is adapted with a constraint image corresponding to one of these failure regions ($o^{1}, o^{2}, o^{3}$), it produces a safety value function similar to that of \latentsafe (top three graphs of Fig.~\ref{fig:arbitrary_constraint}). By contrast, when \ours is adapted with a constraint image located between the failure regions ($o^{2.5}$), it identifies the new unsafe set (bottom of Fig.~\ref{fig:arbitrary_constraint}), whereas none of the safety value functions \latentsafe detect it as unsafe. 
Moreover, the \anysafe computed directly from raw DINOv2 representations without the failure projector \eqref{eq:failure-projector} yields ineffective value function estimates (bottom graph of Fig.~\ref{fig:arbitrary_constraint}), as raw DINO similarities are not aligned with safety similarity semantics.

\subsection{Can \anysafe Adapt its Level of Conservativeness?} 

We show that the calibration process allows control over the level of conservativeness by selecting different values of $\alphaCalibration$ in \eqref{eq:class-conditioned-guarantee}, enabling adaptive conservativeness of filtering.

\para{Setup} Using the same calibration dataset, we perform calibration with $\alphaCalibration \in \{0.01, 0.1\}$ to get $\delta_{0.1}$ and $\delta_{0.01}$. For each calibrated threshold, we replay $15$ action sequences as $\policyTask$ that attempt to sweep the objects in the same direction from different starting points, while \ours filters the $\policyTask$.

\begin{figure}[t]
    \centering \includegraphics[width=1.0\linewidth]{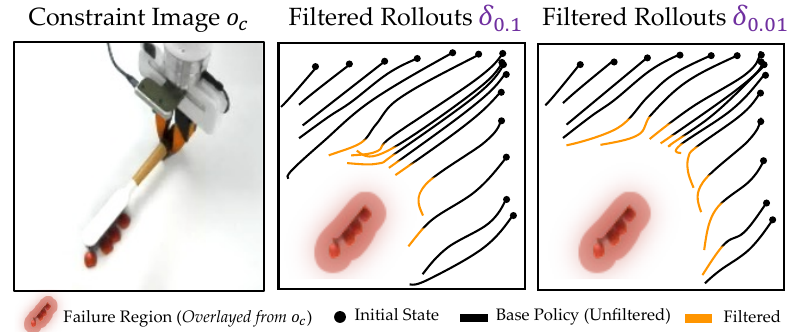} \caption{\textbf{Safety Filtering with Different Conservativeness.} Smaller $\alpha$ in calibration leads to more conservative filtering.}
    \label{fig:sweep_calib}
    \vspace{-0.25in}
\end{figure}

\para{Results} Fig.~\ref{fig:sweep_calib} shows the filtered trajectories with different thresholds. It shows that \ours can enforce different levels of conservativeness depending on $\alphaCalibration$ during the calibration. While consistently preventing the system from entering the failure region specified by the constraint images, a smaller $\alpha$ yields larger margins from the failure region by activating the safety filter more proactively with a calibrated threshold.

\section{Conclusion \& Limitations}
\vspace{-0.02in}
In this work, we propose \anysafe, a constraint-parameterized latent safety filter that adapts to runtime safety constraints. To enable constraint parameterization in latent space, we condition safety filters on constraint representations encoded from images, using a latent-space similarity measure and conformal calibration to align similarities with the user’s notion of failure. Through experiments, we show that \anysafe adapts to arbitrary safety constraints specified by a failure image, while also allowing control over the conservativeness of the filtering through conformal calibration. 

\vspace{-0.05in}
\para{Limitations} Our work assumes adaptation only to images seen during world model training, leaving open the challenge of generalizing to unseen constraints. It also relies on dense labels for learning failure-related similarity measures, motivating future work on data-efficient metric learning to better align similarity with a user’s notion of failure.
\vspace{-0.02in}
\addtolength{\textheight}{0cm}

\bibliographystyle{IEEEtran}
\bibliography{mybib}

\ifanonymous
\else
  \section{Appendix}

\subsection{Implementation Details}

\para{Dubins Car} For the Dubins Car experiments, we use a Recurrent State-Space Model (RSSM)~\cite{hafner2019learning} as the world model. RSSM decomposes the latent state into deterministic and stochastic components, $\latent_t := [h_t | x_t]$, where the stochastic latent is modeled as a distribution and optimized via the KL divergence between its prior and posterior. We build on the open-source implementation of DreamerV3\footnote{\url{https://github.com/NM512/dreamerv3-torch}}. We use a continuous stochastic latent space modeled as a 32-dimensional Gaussian. The action space is normalized to [-1, 1]. The hyperparameters for the Dubins Car experiments are provided in Table~\ref{tab:dubins_hyperparams}.
\begin{table}[ht]
    \centering
    \small{
    \renewcommand{\arraystretch}{1.1}{
    \resizebox{0.8\linewidth}{!}{
    \begin{tabular}{lc}
        \toprule
        \textbf{\textsc{Hyperparameter}} & \textbf{\textsc{Value}} \\
        \midrule
        \textsc{Image Dimension} & [128, 128, 3] \\
        \textsc{Action Dimension} & 1 \\
        \textsc{Stochastic Latent} & Gaussian \\
        \textsc{Latent Dim (Deterministic)} & 512 \\
        \textsc{Latent Dim (Stochastic)} & 32 \\
        \textsc{Latent Dim (Failure Projector)} & 512 \\
        \textsc{Activation Function} & SiLU \\
        \textsc{Encoder CNN Depth} & 32 \\
        \textsc{Encoder MLP Layers} & 5 \\
        \textsc{Failure Projector Layers} & 2 \\
        \textsc{Batch Size} & 16 \\
        \textsc{Batch Length} & 64 \\
        \textsc{Optimizer} & Adam \\
        \textsc{Learning Rate} & 1e-4 \\
        \textsc{Iterations} & 10000 \\
        \bottomrule
        \end{tabular}
        }
    }}
    \caption{Dreamer Hyperparameters}
    \label{tab:dubins_hyperparams}
    \vspace{-0.2in}
\end{table}

\para{Vision-Based Sweeping} For the real-world hardware task, we use DINO-WM~\cite{zhou2024dino}, a transformer-based world model that represents latent states using the patch tokens of DINOv2~\cite {oquab2023dinov2}. The DINOv2 encoder is kept frozen, and only the parameters of a vision transformer are trained. The latent $\hat{\latent}_{t+1}$ consisting of dense patch tokens is deterministically predicted by conditioning on a sequence of past normalized actions $\action_{t-H:t}$ and latent tokens $\latent_{t-H:t}$ from the previous $H$ timesteps. The model is trained with teacher forcing to ensure temporal consistency by regressing the latent patches, using the mean squared error (MSE) loss: $
\mathcal{L}_{\text{DINO}} = \left\| \encoder(\obs_{t+1}) - \dynamics_z (\latent_{t-H:t}, \action_{t-H:t}) \right\|^2.
$ The hyperparameters for DINO-WM are provided in Table~\ref{tab:dino_hyperparams}. To measure the similarity of DINOv2 features in Sec.~\ref{exp:generalization}, we compute the norm of the patch tokens to obtain a global feature $\latent \in \mathbb{R}^{384}$ from the dense patches $\latent \in \mathbb{R}^{N_\text{patches} \times 384}$.

\begin{table}[ht]
    \centering
    \small{
    \renewcommand{\arraystretch}{1.1}{
    \resizebox{0.8\linewidth}{!}{
    \begin{tabular}{lc}
        \toprule
        \textbf{\textsc{Hyperparameter}} & \textbf{\textsc{Value}} \\
        \midrule
        \textsc{Image Dimension} & [224, 224, 3] \\
        \textsc{Action Dimension} & 3 \\
         \textsc{DINOv2 Patch Size} & (16 $\times$ 16, 384)\\
         \textsc{ViT depth} & 6 \\
         \textsc{ViT attention heads} & 16 \\
         \textsc{ViT MLP dim} & 2048 \\
        \textsc{Latent Dim (Failure Projector)} & 512 \\
        \textsc{Activation Function} & SiLU \\
        \textsc{Failure Projector Layers} & 2 \\
        \textsc{Batch Size} & 16 \\
        \textsc{Batch Length} & 4 \\
        \textsc{Optimizer} & Adam \\
        \textsc{Learning Rate} & 5e-5 \\
        \textsc{Iterations} & 100000 \\
        \bottomrule
        \end{tabular}
        }
    }}
    \caption{DINO-WM Hyperparameters}
    \label{tab:dino_hyperparams}
    \vspace{-0.2in}
\end{table}

\para{HJ Reachability Analysis} To solve the latent fixed-point safety Bellman equation, we adopt DDPG~\cite{lillicrap2015continuous} within an off-policy, model-based reinforcement learning framework, using the implementation from~\cite{li2025certifiable}\footnote{\url{https://github.com/jamesjingqili/Lipschitz_Continuous_Reachability_Learning}}. We model the safety value function as a latent-action value function conditioned on the constraint representation $Q(\latent, \action; \latentConstraint)$. The safety policy is parameterized by an actor-network \(\action = \fallback(\cdot \mid \latent, \latentConstraint) \in [-1, 1]^{d_{\text{action}}}\), and the safety value is evaluated by $\monitor(\latent) = \max_aQ(\latent, \action; \latentConstraint) = Q(\latent,\fallback(\latent, \latentConstraint))$.

Each trajectory starts from random initial states with randomly sampled constraint representations. The replay buffer $\mathcal{B}$ then stores transitions of the form $(\latent, \latentConstraint, \action, \tilde{\latentfailuremargin}, \latent')$. The safety filter is optimized using the following objectives:
\begin{align}
       \mathcal{L}_\text{critic} &:= \mathbb{E}_{ \mathcal{B}} \left[ \left(Q(\latent, \action; \latentConstraint) - y\right)^2 \right]
       \\
        y &= (1 - \gamma)\, \tilde{\latentfailuremargin} + \gamma \min\{\tilde{\latentfailuremargin}, \max_{\action'} Q(\latent', \action'; \latentConstraint)\}.  \\
    \mathcal{L}_{\text{actor}} &:= \mathbb{E}_{\latent \sim \mathcal{B}} \left[ -Q(\latent, {\action}; \latentConstraint)\right], \quad {\action} = \fallback(\cdot \mid \latent, \latentConstraint),
\end{align} where $\gamma$ is scheduled from $0.85$ to $0.9999$. The hyperparameters for training DDPG are summarized in Table.~\ref{tab:DDPG_hyperparams}.

\begin{table}[h]
    \centering
    \resizebox{0.8\linewidth}{!}{
    \renewcommand{\arraystretch}{1.1}
    \begin{tabular}{lc}
        \toprule
        \textsc{\textbf{Hyperparameter}} & \textsc{\textbf{Value}} \\
        \midrule
        \textsc{Actor Architecture} & [512, 512, 512, 512] \\
        \textsc{Critic Architecture} & [512, 512, 512, 512] \\
        \textsc{Normalization} & LayerNorm \\
        \textsc{Activation} & ReLU \\
        \textsc{Discount Factor} $\gamma$ & 0.9999 \\
        \textsc{Learning Rate (Critic)} & 1e-3 \\
        \textsc{Learning Rate (Actor)} & 1e-4 \\
        \textsc{Optimizer} & AdamW \\
        \textsc{Number of Iterations} & 640000 \\
        \textsc{Replay Buffer Size} & 1000000 \\
        \textsc{Batch Size} & 512 \\
        \textsc{Max Imagination Steps} & 30 \\
        \bottomrule
    \end{tabular}
    }
    \caption{DDPG hyperparameters.}
    \label{tab:DDPG_hyperparams}
    \vspace{-0.2in}
\end{table}

\subsection{Failure Projector}

In Sec.~\ref{sec:constraint_parameterized_latent_safety_filter} and Eq.~\eqref{eq:failure-projector}, we define a failure projector that maps the raw latent representation of the world model into a metric space where similarities are better aligned with the user’s notion of failure. Since \anysafe constructs the failure set based on a latent-space similarity metric, the projected latent space retains only failure-relevant features. In Sec.~\ref{exp:generalization}, we qualitatively demonstrate that the raw DINOv2 latent space is insufficient to define a failure set, yielding a noisy and uninformative similarity metric that cannot capture fine-grained position-based differences, which in turn degrades the quality of the value function.

\para{Architecture} We implement the failure projector as a 2-layer MLP, with the architecture summarized in Table~\ref{tab:failure_architecture}. For RSSM, the input is the latent vector formed by concatenating the deterministic and stochastic components, $\latent_t = [h_t \, | \, x_t]$, resulting in an input dimension of $d_\text{in} = 512 + 32 = 544$. For DINO-WM, we compute the norm of the patch tokens from the dense latent features $\latent \in \mathbb{R}^{N_\text{patches} \times 384}$ and concatenate the proprioceptive state, yielding an input dimension of $d_\text{in} = 384 + 3 = 387$.

\begin{table}[ht]
\centering
\small
\resizebox{0.8\linewidth}{!}{
\renewcommand{\arraystretch}{1.15}{
    \begin{tabular}{lccc}
    \toprule
    \textbf{\textsc{Layer}} & \textbf{\textsc{Input Dim}} & \textbf{\textsc{Output Dim}} & \textbf{\textsc{Normalization}} \\
    \midrule
    \texttt{Linear} & \(d_\text{in}\) & \(d_\text{in}\) & \texttt{LayerNorm} \\
    \texttt{Linear} & \(d_\text{in}\) & \(32\) & \texttt{LayerNorm} \\
    \bottomrule
    \end{tabular}
}}
\caption{Failure Projector Architecture}
\label{tab:failure_architecture}
\vspace{-0.2in}
\end{table}

\para{Ablation} We provide a qualitative evaluation of the raw world model latent space of Dreamer in the Dubins Car. Fig.~\ref{fig:dubins_appendix} shows both the feature similarities computed using unprojected world model latents and the corresponding safety value function learned with these latents. Compared to the results learned from projected features shown in Fig.~\ref{fig:dubins} and Fig.~\ref{fig:dubins_calib}, the unprojected similarities fail to represent position-based, failure-relevant distinctions, resulting in poorly learned value functions.

\begin{figure}[t]
    \centering \includegraphics[width=1.0\linewidth]{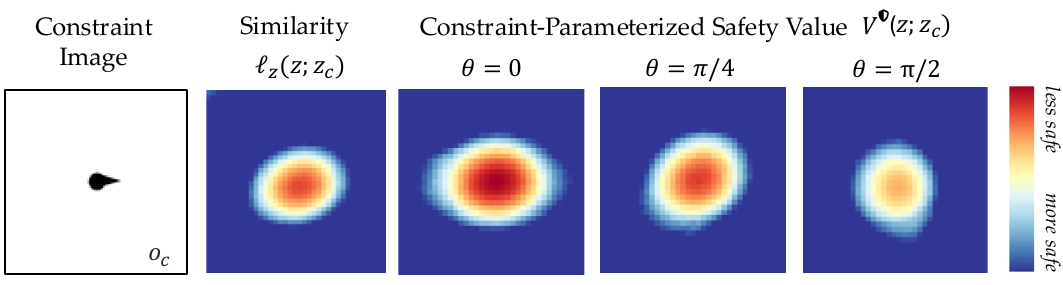}
    \caption{\textbf{Dubins' Car Qualitative Result with Raw Unprojected Features.} We visualize the latent similarity and safety value function at heading slices $\theta \in \{0, \pi/4, \pi/2\}$ using the raw Dreamer features without applying the failure projector.}
    \label{fig:dubins_appendix}
    \vspace{-0.2in}
\end{figure}

\subsection{Calibration of Latent Similarity}

In Sec.~\ref{sec:calib}, we compute a threshold for defining a latent failure set and then apply this threshold at runtime for safety filtering in \eqref{eq:latent-conditioned-safe-control}, operating on the safety value function. Formally, this calibration procedure guarantees only that the similarity measure—and the corresponding failure set—are calibrated. Approximate value function solvers (e.g, RL) can still induce errors in the downstream safety filter, and calibrating the value function directly requires stronger assumptions about access to the ground-truth unsafe set labels~\cite{lin2023generating}. Nevertheless, we prove in Theorem~\ref{theorem:calibration} that under a perfect value function solver, the calibrated threshold $\delta$ can be applied directly to the value function learned from the similarity measure, yielding an unsafe set defined as the sub-threshold level set of the value function.

Specifically, we show that learning a threshold-dependent safety value function $\monitorDelta$ with a safety margin $\ellz^\delta := \ellz - \delta$, is equivalent to learning the value function $\monitor$ from the raw latent similarity $\ellz$ and applying the threshold $\delta$ post hoc. This equivalence implies that calibration can be performed after computing the value function using the latent similarities without adjusting them with thresholds, whereas threshold-dependent training would require recomputing the value function whenever the threshold changes.

\begin{theorem}\label{theorem:calibration}
Let $\ellz:\mathcal{Z}\to\mathbb{R}$ be a safety margin function. Consider the discrete-time safety Bellman backup operator:
\begin{align}
(T\valfunc)(\latent_t)
  &= (1-\gamma)\,\ell(\latent_t) \\
  &\quad + \gamma \min \Big\{ \ell(\latent_t),\;
       \max_{\action_t \in \mathcal{A}}
       \mathbb{E}_{\hat{\latent}_{t+1} \sim \dynz(\cdot \mid \latent_t, \action_t)}
       \big[ V(\hat{\latent}_{t+1}) \big] \Big\}. \nonumber
\end{align}  
Let $\monitor$ denote its unique fixed point. For a constant $\delta\in\mathbb{R}$, define $\ellz^\delta:=\ellz-\delta$ and let $T_\delta$ be the safety Bellman backup operator $T$ where the margin function $\ellz$ is replaced by $\ellz^\delta$:
\begin{align}
(T_\delta\valfunc)&(\latent_t)
  = (1-\gamma)\, (\ell(\latent_t) - \delta) \\
  &\quad + \gamma \min \Big\{ \ell(\latent_t) - \delta,\;
       \max_{\action_t \in \mathcal{A}}
       \mathbb{E}_{\hat{\latent}_{t+1} \sim \dynz(\cdot \mid \latent_t, \action_t)}
       \big[ V(\hat{\latent}_{t+1}) \big] \Big\}. \nonumber
\end{align}
Let the fixed point of this Bellman backup be $\monitorDelta$. Then:
\begin{align}
&\monitorDelta \;=\; \monitor - \delta \quad \text{and}\\
&\{\,z:\; \monitorDelta(z) < 0 \,\} \;=\; \{\,z:\; \monitor(z) < \delta \,\}.
\end{align}
\end{theorem}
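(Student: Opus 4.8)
The plan is to exhibit an explicit conjugacy between the two Bellman backup operators under the constant shift $V \mapsto V - \delta$, and then invoke uniqueness of fixed points. First I would record that both $T$ and $T_\delta$ are $\gamma$-contractions in the sup-norm on the space of bounded functions $\mathcal{Z}\to\mathbb{R}$ — this is the standard argument of \cite{fisac2019bridging}, using that $\gamma\in[0,1)$ and that $\min$, $\max_{\action}$, and $\mathbb{E}$ are all non-expansive — so each admits a unique fixed point by the Banach fixed-point theorem. Note that replacing $\ellz$ by $\ellz-\delta$ preserves boundedness since $\delta$ is a constant, so $T_\delta$ is well-posed on the same function class, and $\monitor,\monitorDelta$ are well-defined.

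The key step is the pointwise identity $\big(T_\delta(V-\delta)\big)(\latent) = (TV)(\latent) - \delta$ for every bounded $V$ and every $\latent$. This follows by expanding the definition of $T_\delta$ applied to $W := V-\delta$ and using three elementary shift-invariance facts: $\mathbb{E}_{\hat{\latent}_{t+1}}[W(\hat{\latent}_{t+1})] = \mathbb{E}_{\hat{\latent}_{t+1}}[V(\hat{\latent}_{t+1})] - \delta$; $\max_{\action}\big(\mathbb{E}[V(\hat{\latent}_{t+1})] - \delta\big) = \max_{\action}\mathbb{E}[V(\hat{\latent}_{t+1})] - \delta$; and $\min\{a-\delta,\,b-\delta\} = \min\{a,b\} - \delta$. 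Substituting these into the expression for $(T_\delta W)(\latent_t)$ and collecting terms, the two shift contributions combine as $(1-\gamma)\delta + \gamma\delta = \delta$, leaving exactly $(TV)(\latent_t) - \delta$.

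Applying this identity with $V = \monitor$, the fixed point of $T$, gives $T_\delta(\monitor - \delta) = T\monitor - \delta = \monitor - \delta$, so $\monitor - \delta$ is a fixed point of $T_\delta$; by uniqueness of the fixed point, $\monitorDelta = \monitor - \delta$. The level-set claim is then immediate: $\monitorDelta(z) < 0$ holds iff $\monitor(z) - \delta < 0$ iff $\monitor(z) < \delta$, so $\{z : \monitorDelta(z) < 0\} = \{z : \monitor(z) < \delta\}$.

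I do not anticipate a substantive obstacle: the argument is a routine conjugacy computation. The only points needing a little care are (i) stating the contraction/uniqueness framework on a function class (bounded functions) that is preserved by the constant shift, so that both fixed points genuinely live in the same space, and (ii) writing out the shift-invariance bookkeeping cleanly enough that the $(1-\gamma)\delta$ and $\gamma\delta$ contributions are visibly seen to sum to $\delta$; both are straightforward.
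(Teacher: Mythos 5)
Your proposal is correct and matches the paper's proof essentially step for step: both establish the conjugacy identity $T_\delta(V-\delta) = TV - \delta$ via shift-invariance of $\min$/$\max$ and linearity of expectation, then apply it to $\monitor$ and invoke uniqueness of the fixed point (per \cite{fisac2019bridging}) to get $\monitorDelta = \monitor - \delta$ and the level-set equality. Your extra care about the contraction/bounded-function framework is a minor elaboration of what the paper handles by citation, not a different route.
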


\begin{proof}
Let $\valfunc : \latentSpace \rightarrow \mathbb{R}$ be any value function. Recall the linearity of expectation and the shift-invariance identities:
\begin{align*}
        \min\{a-\delta,b-\delta\} \;=\; \min\{a,b\}-\delta, \\ \max\{a-\delta,b-\delta\} \;=\; \max\{a,b\}-\delta. 
\end{align*} 
Using $\ellz^\delta=\ellz-\delta$, we compute
\begin{align*}
(T_\delta (\valfunc-\delta))(\latent_t)
&= (1-\gamma)\,(\ellz(\latent_t)-\delta)\\
& + \gamma \min\!\Big\{\ellz(\latent_t)-\delta,\;
       \max_{\action_t\in\mathcal{A}}
       \mathbb{E}\big[\,\valfunc(\hat{\latent}_{t+1})-\delta\,\big]\Big\} \\
&= (1-\gamma)\,\ellz(\latent_t) - (1-\gamma)\delta \\
& + \gamma \Big(\min\!\Big\{\ellz(\latent_t),\;
       \max_{\action_t\in\mathcal{A}}
       \mathbb{E}\big[\valfunc(\hat{\latent}_{t+1})\big]\Big\} - \delta\Big) \\
&= \Big((1-\gamma)\,\ellz(\latent_t)
        \\& + \gamma \min\!\Big\{\ellz(\latent_t),\;
            \max_{\action_t\in\mathcal{A}}
            \mathbb{E}\big[\valfunc(\hat{\latent}_{t+1})\big]\Big\}\Big) - \delta \\
&= (T \valfunc)(\latent_t) - \delta.
\end{align*}
In particular, if $\valfunc := \monitor$ is the fixed point of $T$, then, 
\begin{equation}
    T_\delta(\monitor-\delta) = T\monitor - \delta = \monitor - \delta,
\end{equation} where $\monitor-\delta$ is a fixed point of $T_\delta$. Since the fixed point is unique \cite{fisac2019bridging}, $\monitorDelta=\monitor-\delta$. 

Thus, the corresponding unsafe sets are equivalent:
\begin{align}
    \{\,\latent:\; \monitorDelta(\latent) < 0 \,\} &=\{\,\latent:\; \monitor(\latent)-\delta < 0 \,\} \nonumber \\
    &=\{\,\latent:\; \monitor(\latent) < \delta \,\}.
\end{align}
\end{proof}

\fi

\end{document}